\documentclass{article}
\usepackage[margin=1in]{geometry}
\usepackage[utf8]{inputenc}
\usepackage{graphicx}
\usepackage{caption}
\usepackage{amsmath}
\usepackage{amssymb}
\usepackage{float}
\usepackage{hyperref}
\usepackage{amsthm}
\usepackage{xcolor}
\usepackage{comment}
\usepackage{amsfonts}
\usepackage{dsfont}
\usepackage{stmaryrd}
\usepackage{xcolor}
\usepackage{subcaption}

\DeclareMathOperator{\R}{\mathbb{R}}

\def\Dc{{\cal D}}
\def\Lc{{\cal L}}

\def\Kc{{\cal K}}

\def\Lc{{\cal L}}

\def\Nc{{\cal N}}
\def\Pc{{\cal P}}

\def\Vc{{\cal V}}

\def \Yc{{\cal Y}}
\def \Wc{{\cal W}}

\def \E{\mathbb{E}}

\def \1{{\bf 1}}

\def \N{\mathbb{N}}

\def \b1{\bf{1}}

\def\boQ{{\boldsymbol Q}}
\def\boM{{\boldsymbol M}}
\def\boL{{\boldsymbol L}}
\def\boS{{\boldsymbol S}}
\def\boV{{\boldsymbol V}}

\def \d{\mathrm{d}} 
 
\def \mrp{\mathrm{p}} 
\def \boQ{\mathrm{Q}}

\def\blue[#1]{{\color{blue}#1}}

\def\beqs{\begin{eqnarray*}}
\def\enqs{\end{eqnarray*}}
\def\beq{\begin{eqnarray}}
\def\enq{\end{eqnarray}}

\def\argmin{\mathop{\rm argmin}}
\def\argmin_#1{\underset{#1}{\mathrm{argmin\, }}}


\newtheorem{Theorem}{Theorem}[section]

\newtheorem{Remark}[Theorem]{Remark}

\usepackage{mathtools}
\mathtoolsset{showonlyrefs}

\usepackage[backend=bibtex, maxbibnames=5, style=numeric]{biblatex}

\addbibresource{bibli.bib}

\numberwithin{equation}{section}

\title{Quantile and moment neural networks for learning functionals of distributions.
\thanks{This work is supported by  FiME, Laboratoire de Finance des March\'es de l'Energie, and the ''Finance and Sustainable Development'' EDF - CACIB Chair.}
}

\author{
Xavier \sc{Warin}
\footnote{EDF R\&D \& FiME \sf \href{mailto:xavier.warin at edf.fr}{xavier.warin at edf.fr}}}


\begin{document}

\maketitle
\begin{abstract}
We study  news neural networks to approximate function of distributions in a probability space. 
Two classes of neural networks based on quantile and moment approximation are proposed to learn these functions and are theoretically supported by universal approximation theorems. 
By mixing the quantile and moment features in other new networks, we develop schemes that outperform existing networks on numerical test cases involving univariate distributions. For bivariate distributions, the moment neural network outperforms all other networks.
 \end{abstract}

\section{Introduction}
The deep neural networks have been successfully used to solve high dimensional PDEs  either by solving the PDE using physics informed methods, or by using backward stochastic differential equations (see \cite{beck2020overview}, \cite{GPW21} for an overview).
Recently the  mean field game and control theory has allowed the formalization of problems involving large populations of interacting agents. The solution of such  problems is a function depending on the probability distribution of the population  and can be obtained by solving a PDE in the Wasserstein space of probability measures (called the Master equation) or by solving BSDEs of McKean-Vlasov (MKV)  (see \cite{cardel19,cardel2} ).
In this case, the resulting PDE is infinite dimensional and must be reduced to a  (high) finite dimensional problem to be tractable.\\
To solve such problems, \cite{pham2022mean} has developed two schemes  approximating functions depending on both  $X$ a random variable and $\mu$  a probability distribution, where $X \sim \mu$. 
The first scheme is based on a bin representation of the density  and the second uses a neural network to  automatically extract the key features of the distribution. 
In both cases, these key features and the $X$ values are used as input to a neural network permitting  to approximate the value function.  The first approach is the bin network and the second one is the cylinder network.
Both schemes have been successfully applied to  various toy cases in the case of  one-dimensional distributions and used to solve the master equation using its semi-lagrangian representation in  \cite{warinpham2022meanW}.
As we explain in the next section, the $X$ dependence on the functional is not relevant for testing the various networks developed, and we focus in this article only on the dependence on the distribution.\\
In this article we propose new  different networks to approximate functions depending on distributions:
\begin{itemize}
\item The first one, limited to one-dimensional distributions, uses the quantile of the distribution as key features : the resulting scheme gives the quantile network scheme.
\item The second ones uses the moments of the distribution as key features and leads to the moment network scheme.
\item Finally, the two previous features can be mixed to  take  advantage of the first two  networks.
\end{itemize}
We give some universal approximation theorems for the first two networks.
We test the developed networks on functions of univariate and bivariate distributions,   where possible. We compare the proposed networks with the bin network and the cylinder network and show that:
\begin{itemize}
\item The moment network {\bf or}  the quantile network outperform the cylinder network and the bin network in the case of univariate distributions: the best solution obtained  by the two networks always gives better results  (on our tests) than the cylinder network and the bin network.
\item By combining quantile and moment, we obtain a neural network that always outperforms the cylinder and the bin  networks.
\item In the case of bivariate distributions, the bin network fails and the moment network outperforms all other networks.
\end{itemize}
The structure of the article is as follows. In a first section, we formalize our problem as a minimization problem on the distribution space using a formal neural network. We give the general methodology for sampling distributions in the general multivariate case, and  show how to solve the previous minimization problem  using a stochastic gradient method. 
The second section is dedicated to the  different  proposed neural networks. 
The last one is dedicated to numerical results for univariate and bivariate distributions.
A final conclusion is  given.
\vspace{2mm}

\noindent {\bf Notations.} Denote by $\Pc_2(\R^d)$   the Wasserstein space of square integrable probability measures equipped with the $2$-Wasserstein distance $\Wc_2$. 
 Given some  $\mu$ $\in$ $\Pc_2(\R^d)$, and $\phi$ a measurable function on $\R^d$ with quadratic growth condition, hence in $L^2(\mu)$, we set: $\E_{X\sim\mu}[\phi(X)]$ $:=$ $\int \phi(x) \mu(\d x)$.

\section{Learning distribution functions}
\label{sec:learning}
Given a function $V$ on $\Pc_2(\R^d)$, valued on $\R^p$, we want to approximate the infinite-dimensional mapping
\begin{align} \label{defVc} 
\Vc : \mu \in \Pc_2(\R^d) & \longmapsto \;  V(\mu) \in  \R^p, 
\end{align} 
called the distribution function, by a map  $\Nc$ constructed from suitable classes of neural networks. The distribution network $\Nc$ takes input $\mu$ a probability measure  and outputs $\Nc(\mu)$. 
The quality of this approximation is measured by the error:
\begin{align}
L(\Nc)  :=  \int_{\Pc_2(\R^d)}  \big| \Vc(\mu) - \Nc(\mu) \big|^2  \nu(\d \mu)
\label{eq:prob}
\end{align}
where $\nu$ is a probability measure on $\Pc_2(\R^d)$, called the training measure. The distribution function $\Vc$ is learned by minimizing the loss function over the parameters of the neural network operator $\Nc$ .\\
In the article \cite{pham2022mean},  the authors learn what they call a mean-field function,  a function  $ \hat V$ depending on both $\mu$ and $x$.
The network is a function $ \hat \Nc$ that takes as input $\mu$ a probability measure  and $x$ in the support of $\mu$, and outputs $\hat \Nc(\mu)(x)$.
The  solution is found by minimizing:
\begin{align}
\hat L( \hat \Nc) :=  \int_{\Pc_2(\R^d)}  \E_{X\sim\mu}\big| \hat V(X,\mu) - \hat \Nc(\mu)(X) \big|^2  \nu(\d \mu).
\label{eq:art1}
\end{align}
The resolution of the equation \eqref{eq:art1} is more general than the resolution of the equation \eqref{eq:prob}, but in fact the result is simply obtained by concatenating $x$ and the representation of the distribution $\mu$ as input to the neural network, similarly as suggested in \cite{gerlauphawar21a}. Therefore we focus on the resolution of the problem \eqref{eq:prob}.\\
In the following, we explain   how to sample distributions and  how to generate samples for a given distribution in the multivariate case.
Then we explain the global methodology used to train the neural networks.  This methodology is used for all the networks developed in the next sections. 
\subsection{Sampling distribution on a compact set}
\label{sec:sampling}
To learn  a function of a distribution  $\mu$ with support in  $\Kc = [\underline \Kc_1, \bar \Kc_1] \times \ldots \times [\underline \Kc_d, \bar \Kc_d] \subset \R^d$, we must be   able to "generate" distributions and, having chosen the distribution $\mu$ , to  efficiently sample    $X \sim \mu$ in $\R^d$.\\
As done in \cite{pham2022mean},  we use a bin representation but propose a  different algorithm to tackle the multivariate case.  
By tensorization, a multivariate bin representation for a lattice  $(J_1,\ldots, J_d)$ is given,  for $( j_1, \ldots , j_d) \in [1,J_1] \times \ldots \times [1,J_d]$,  by 
$$ {\rm Bin}(j_1,\ldots,j_d) = \prod_{i=1}^d [\underline \Kc_i + (j_i-1) \frac{\bar \Kc_i - \underline \Kc_i }{J_i}, \underline \Kc_i + j_i \frac{\bar \Kc_i - \underline \Kc_i }{J_i}].$$
\begin{itemize}
\item First, we generate a distribution $\mu$   by sampling $e_1, \ldots , e_{\prod_{i=1}^d J_i}$,  positive random variables  according to an exponential law, and set for $( j_1, \ldots , j_d) \in [1,J_1] \times \ldots \times [1,J_d]$ 
$$p(j_1, \ldots, j_d) = \frac{e_{j_1 +j_2 J_1 +\ldots  j_d \prod_{i=1}^{d-1} J_i}}{\sum_{i=1}^{\prod_{i=1}^d J_i} e_i}$$
which gives a constant per bin probability measure where the  probability of sampling in ${\rm Bin}(j_1,\ldots,j_d)$ is given by  $p(j_1, \ldots, j_d)$.
\item  Now that we have chosen $\mu$,
we can generate $N$ samples of $d$ dimensional coordinates  $(j_1^n, \ldots , j_d^n) \in [1,J_1] \times \ldots \times [1,J_d]$  for $n \in [1,N]$ such that 
$${\rm proba}[(j_1^n, \ldots , j_d^n) = (j_1, \ldots , j_d)] = p(j_1, \ldots, j_d).$$
Finally, we sample $Y^n \sim \bold{U}([0,1]^d) $  for $n=1, N$, and set
\begin{align}
    X^n =& (X_1^n, \ldots, X_d^n), \\
    { \rm where } \quad  & X_i^n  = \underline \Kc_i + (j_i^n-1 + Y_i^n) \frac{\bar \Kc_i - \underline \Kc_i }{J_i},  {\rm for } \quad  i=1, \dots, d.
\end{align}
\end{itemize}
\begin{Remark}
This procedure allows to generate points according to a constant  density function per bin. In dimension one, it is equivalent to the algorithm proposed in \cite{pham2022mean}  which generates points with a linear representation of the cumulative distribution function.
\end{Remark}

\subsection{The training methodolody}
Since the  equation \eqref{eq:prob} is infinite dimensional, we need to introduce a discretization of the measure. We note  $R^K(\mu) := (R_{k}^K(\mu))_{k=1,\ldots,K}$ the $K$ features estimated from the law $\mu$. The features selected depend on the method developed and will be detailed in the following sections.\\
The neural network $ \Nc(\mu) :=  \Phi_\theta(R^K(\mu))$  is such that  $\Phi_\theta$ is  an operator from $\R^K$ to $\R^p$ depending on some parameters $\theta$ and  we use a gradient descent algorithm 
(ADAM  \cite{kingma2014adam}) with Tensorflow software \cite{abadi2016tensorflow} to minimize the loss
\begin{align}
 \bar  L(\theta)  :=  \int_{\Pc_2(\R^d)}  \big| \Vc(\mu) - \Phi_\theta(R^K(\mu)) \big|^2  \nu(\d \mu)
\label{eq:probTheta}
\end{align}
with respect to the parameters $\theta$.\\
At each iteration of the stochastic gradient,
\begin{itemize}
 \item $M$ distributions  $(\mu^m)_{m=1,M}$ are generated and  for each $\mu^m$,  $X^{m,n} \sim \mu^m$ are generated for $n=1, \ldots, N$, following the methodology given in section \ref{sec:sampling}.
 \item The $K$ features representing the law  are estimated from the $N$ samples for a given  estimator $R^{K,N}(\mu^m) := (R^{K,N}_k( (X^{m,n})_{n=1,N}))_{k=1,\ldots,K}$.
 \end{itemize}
The discretized version of the loss function  \eqref{eq:probTheta} is then
\begin{align}
\tilde L(\theta)  :=  \; \frac{1}{M} \sum_{m=1}^M   \big| V(\mu^{m}) -  \Phi_\theta(R^{K,N}(\mu^m)) \big|^2.
\label{eq:probDis}
\end{align}
 The learning rate associated with the gradient method is  equal to $5 \times 10^{-3}$ in all the experiments.

\section{The networks}
\label{sec:theNets}
\subsection{The quantile network for one dimensional distribution}
\label{sec:quantnet}
Let   $\Dc_2(\R)$  be the subset of probability measures $\mu$ in $\Pc_2(\R)$ admitting a  density function $\mrp^\mu$  with respect to the Lebesgue measure $\lambda$ on $\R$. 
Fixing  $\Kc$ as  a bounded segment in $\R$, we want to approximate the functions of the distributions with support in $\Kc$.\\
For a distribution $\mu$, we note $F_{\mu}$ its cumulative distribution function and  we note $Q_\mu$ the quantile
function defined as $Q_\mu(p) = \inf\{x \in \Kc : p \le F_\mu(x) \}$. In the sequel, we also use the notation $Q_X$ for  $Q_\mu$ if $X \sim \mu$. \\
Choosing $K > 0 $, the main  characteristics of the distribution $\mu$ are given  by 
\begin{align}
\boQ_\mu^K = (Q_{\mu}(\frac{k}{K+1}))_{k=1,K}
\end{align}
which lies on $\Dc_K$ $:=$ $\{ Q := (q_k)_{k=1,K} : q_1 < \dots < q_K \}$. \\
A quantile network is thus an operator on $\Dc_2(\R)$ in the form
 \begin{align}
 \Nc_Q(\mu) =  \Phi_\theta(\boQ_\mu^K),
 \end{align}
 so setting $R^K(\mu)= \boQ_\mu^K$ in equation \eqref{eq:probTheta}.\\
 Let us denote by $\Dc_{C^1}(\Kc)$ the subset of elements $\mu$ in $\Dc_2(\R)$ with support in $\Kc $, with continuously derivable  density functions $\mrp^\mu$. We get the following universal approximation theorem:
 \begin{Theorem} \label{theounivbin} 
Let $\Kc =[ \underline \Kc, \bar \Kc] $ be a bounded segment in  $\R$, 
$V$  a continuous function from $\Pc_2(\R)$ to $\R$. Then, for all 
$\varepsilon$ $>$ $0$, there exists $K$ $\in$ $\N^*$, and 
$\Phi$ a neural network on $\R^K$ with values in $\R$ such that 
\begin{align}
\big| V(\mu) - \Phi(\boQ_\mu^K)\big| 
& \leq  \; \varepsilon, \quad \forall \mu \in \Dc_{C^1}(\Kc). 
\end{align}
\end{Theorem}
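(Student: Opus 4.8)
The plan is to factor the functional $V$ through a finite-dimensional \emph{reconstruction} of the measure from its quantile vector, and then to invoke the classical universal approximation theorem for feed-forward networks on a compact subset of $\R^K$. Throughout, set $\Sc_{\Kc} := \{\mu\in\Pc_2(\R) : \mathrm{supp}(\mu)\subseteq\Kc\}$ and $\mathcal C_K := \{ q=(q_1,\dots,q_K)\in\Kc^K : q_1\le\dots\le q_K\}$; note that $\mathcal C_K$ is compact and contains $\boQ_\mu^K$ for every $\mu\in\Dc_{C^1}(\Kc)$ (we work with the closure of $\Dc_K$, since a $C^1$ density may vanish on subintervals and produce equal consecutive quantiles).

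\emph{Step 1 (reconstruction and a uniform estimate).} For $q\in\mathcal C_K$ I would let $\Gamma_K(q)\in\Pc_2(\R)$ be the discrete measure whose quantile function is the step function equal to $q_k$ on $(\tfrac{k-1}{K+1},\tfrac{k}{K+1}]$ for $k=1,\dots,K$ and equal to $q_K$ on $(\tfrac{K}{K+1},1]$; equivalently, $\Gamma_K(q)$ carries mass $\tfrac1{K+1}$ at each of $q_1,\dots,q_{K-1}$ and mass $\tfrac2{K+1}$ at $q_K$. Then $q\mapsto\Gamma_K(q)$ is Lipschitz, hence continuous, from $(\mathcal C_K,|\cdot|)$ into $(\Pc_2(\R),\Wc_2)$, and $\Gamma_K(q)\in\Sc_{\Kc}$. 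For $\mu\in\Sc_{\Kc}$, writing $q=\boQ_\mu^K$, $g_1 := q_1-\underline\Kc$, $g_k := q_k-q_{k-1}$ for $2\le k\le K$ and $g_{K+1} := \bar\Kc-q_K$, the classical one-dimensional identity $\Wc_2^2(\mu,\nu)=\int_0^1|Q_\mu(p)-Q_\nu(p)|^2\,\di p$ applied with $\nu=\Gamma_K(q)$, together with the monotonicity of $Q_\mu$ and the bounds $\underline\Kc\le Q_\mu\le\bar\Kc$, bounds the integrand on each of the $K+1$ subintervals by the corresponding $g_k^2$, whence
\begin{align}
\Wc_2^2\big(\mu,\Gamma_K(\boQ_\mu^K)\big)\;\le\;\frac{1}{K+1}\sum_{k=1}^{K+1}g_k^2\;\le\;\frac{1}{K+1}\Big(\sum_{k=1}^{K+1}g_k\Big)^2\;=\;\frac{(\bar\Kc-\underline\Kc)^2}{K+1},
\end{align}
using $g_k\ge0$ and $\sum_k g_k=\bar\Kc-\underline\Kc$. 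The estimate is uniform in $\mu\in\Sc_{\Kc}$.

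\emph{Step 2 (compactness, then universal approximation).} The set $\Sc_{\Kc}$ is tight, hence weakly compact by Prokhorov; since all moments converge automatically for measures supported in the fixed compact $\Kc$, weak convergence coincides with $\Wc_2$-convergence on $\Sc_{\Kc}$, so $\Sc_{\Kc}$ is $\Wc_2$-compact and $V|_{\Sc_{\Kc}}$ is uniformly continuous, with some modulus of continuity $\omega$. Set $\tilde V_K := V\circ\Gamma_K : \mathcal C_K\to\R$, which is continuous by Step 1. Then for every $\mu\in\Dc_{C^1}(\Kc)\subseteq\Sc_{\Kc}$ one gets $|V(\mu)-\tilde V_K(\boQ_\mu^K)|\le\omega\big((\bar\Kc-\underline\Kc)/\sqrt{K+1}\big)$. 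I would now fix $K\in\N^*$ so that this quantity is $\le\varepsilon/2$, extend $\tilde V_K$ to a continuous function on the cube $\Kc^K$ by Tietze's theorem, and apply the classical universal approximation theorem to obtain a feed-forward network $\Phi:\R^K\to\R$ with $\sup_{q\in\mathcal C_K}|\tilde V_K(q)-\Phi(q)|\le\varepsilon/2$. The triangle inequality then yields $|V(\mu)-\Phi(\boQ_\mu^K)|\le\varepsilon$ for all $\mu\in\Dc_{C^1}(\Kc)$, as required.

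\emph{Main obstacle.} The only genuinely delicate point is Step 1: the reconstruction $\Gamma_K$ must simultaneously be continuous in the Euclidean data (so that $V\circ\Gamma_K$ is a legitimate continuous target for the plain universal approximation theorem) and $\Wc_2$-close to $\mu$ at a rate independent of $\mu$ — and it is precisely the boundedness of $\Kc$ together with the monotonicity of quantile functions that buys this uniformity. The remaining ingredients (compactness of $\Sc_{\Kc}$, the classical universal approximation theorem, Tietze extension) are standard. Note that $C^1$ regularity of the density is never used, so the conclusion in fact holds for every $\mu\in\Pc_2(\R)$ with support in $\Kc$.
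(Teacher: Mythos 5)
Your proof is correct, and it takes a genuinely different route from the paper's. The paper reconstructs from $\boQ_\mu^K$ a piecewise-constant \emph{density} $p^{\boQ}_{\mu,K}$, equal to $1/(K(Q_{\mu,k+1}^K-Q_{\mu,k}^K))$ on each inter-quantile interval, and controls $\Wc_1(\mu,\hat\mu^K)$ via the Kantorovich--Rubinstein dual representation, splitting the bins according to whether the averaged density $\bar\mrp^\mu_k$ is below or above a threshold $\epsilon$ and invoking the mean value theorem together with the bound $C=\sup_y \mrp^{\mu'}(y)$ on the large-density bins; it then passes from $\Wc_1$ to $\Wc_2$ by H\"older. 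This is exactly where the $C^1$ hypothesis enters. You instead reconstruct a \emph{discrete} measure $\Gamma_K(\boQ_\mu^K)$ and exploit the exact one-dimensional identity $\Wc_2^2(\mu,\nu)=\int_0^1|Q_\mu(p)-Q_\nu(p)|^2\,\d p$ together with the monotonicity of quantile functions, which yields the clean uniform rate $\mathrm{diam}(\Kc)/\sqrt{K+1}$ with no regularity assumption whatsoever. Your route buys two things: (i) the estimate is manifestly uniform over all $\mu$ supported in $\Kc$ --- whereas in the paper the constant $C$ depends on $\mu$, so the uniform convergence claimed in \eqref{convWK} over all of $\Dc_{C^1}(\Kc)$ really requires an additional uniform bound on the density derivatives --- and (ii) the conclusion extends verbatim to every $\mu\in\Pc_2(\R)$ with support in $\Kc$, as you note. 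The only thing the paper's construction buys in exchange is that the reconstructed measure $\hat\mu^K$ has a density; this is immaterial here since $V$ is assumed continuous on all of $\Pc_2(\R)$, so your atomic $\Gamma_K(q)$ is a legitimate argument for $V$. Your final step (compactness of the set of measures supported in $\Kc$, uniform continuity of $V$ there, Tietze extension plus the classical universal approximation theorem on the compact $\mathcal C_K$) coincides in substance with the paper's Step 2, and your care in passing to the closed simplex $\mathcal C_K$ (allowing equal consecutive quantiles) is a detail the paper glosses over.
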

\begin{proof}
The proof is very similar to the proof of theorem 2.1 in \cite{pham2022mean}.
The only difference is the modification of step one in the proof.
From the quantile representation of the density function, we get the following density step approximation:
\begin{align*}
p^{\boQ}_{\mu,K}(x) =  \frac{1}{K (Q_{\mu,k+1}^K - Q_{\mu,k}^K )} , \mbox { for } x \in ]Q_{\mu,k}^K, Q_{\mu,k+1}^K],  \quad 0 \le k \le K \\
\end{align*}
where $Q_{\mu,k}^K = Q_\mu^K(\frac{k}{K+1})$, for $k = 1, \ldots, K$, $Q_{\mu,0}^K =\underline \Kc$ , $Q_{\mu,K+1}^K = \bar \Kc$.\\
For $\mu$ $\in$ $\Dc_{C^1}(\Kc)$  with density $\mrp^\mu$,  denote by $\hat \mu^K$ $=$ $\Lc_D(p^{\boQ}_{\mu,K})$  the probability measure with  density representation $p^{\boQ}_{\mu,K}$.  

Since $\mu$, $\hat \mu^K$ are supported on  the compact set $\Kc$, they lie in $\Pc_1(\R^d)$ the set of probability measures with finite first moment.
From the Kantorovich-Rubinstein dual representation of the $1$-Wasserstein distance, we have 
\beqs
\Wc_1(\mu,\hat\mu^K) & = & \sup_{\phi} \int_{\Kc}  \phi(x) (\mrp^\mu(x) - p^{\boQ}_{\mu,K}(x)) \d x,  
\enqs
where the supremum is taken over all Lipschitz continuous functions $\phi$ on $\Kc$ with Lipschitz constant bounded by $1$, and where we can assume w.l.o.g. that $\phi(x_0)$ $=$ $0$ for some fixed point $x_0$ in $\Kc$. \\
Noting $\bar \mrp^\mu_k := \frac{1}{(Q_{\mu,k+1}^K - Q_{\mu,k}^K ) }\int_{Q_{\mu,k}^K}^{Q_{\mu,k+1}^K} 
\mrp^\mu(s) ds  =  \mrp^\mu(\tilde x_k)$  with $\tilde  x_k \in [Q_{\mu,k}^K ,Q_{\mu,k+1}^K] $ due to mean value theorem:
\begin{align}
\Wc_1(\mu,\hat\mu^K) & \leq &  \sup_{\phi} \sum_{k=1}^K \int_{Q_{\mu,k}^K}^{Q_{\mu,k+1}^K} |\phi(x)| |\big( \mrp^\mu(x) - \frac{1}{K (Q_{\mu,k+1}^K - Q_{\mu,k}^K )}\big)| \d x \\
& \leq & {\rm diam}(\Kc) \sum_{k=1}^K \int_{Q_{\mu,k}^K}^{Q_{\mu,k+1}^K}  |\big( \mrp^\mu(x) - \frac{1}{K (Q_{\mu,k+1}^K - Q_{\mu,k}^K )}\big)| \d x \\
& = &  {\rm diam}(\Kc) \sum_{k=1}^K  \int_{Q_{\mu,k}^K}^{Q_{\mu,k+1}^K}  |\mrp^\mu(x)  - \bar \mrp^\mu_k |  dx\\
& = &  {\rm diam}(\Kc) \sum_{k=1}^K  ( 1_{\bar \mrp^\mu_k < \epsilon} +  1_{\bar \mrp^\mu_k \ge \epsilon} ) \int_{Q_{\mu,k}^K}^{Q_{\mu,k+1}^K}  |\mrp^\mu(x)  - \bar \mrp^\mu_k |  dx 
\label{eq:p0}
\end{align}
where we used that  $|\phi(x)|$ $\leq$ $|x-x_0|$ $\leq$ ${\rm diam}(\Kc)$.\\
Then :
\begin{align}
 \sum_{k=1}^K  1_{\bar \mrp^\mu_k < \epsilon} \int_{Q_{\mu,k}^K}^{Q_{\mu,k+1}^K}  |\mrp^\mu(x)  - \bar \mrp^\mu_k |  dx \le & \sum_{k=1}^K  1_{\bar \mrp^\mu_k < \epsilon} \int_{Q_{\mu,k}^K}^{Q_{\mu,k+1}^K}  (\mrp^\mu(x)  + \bar \mrp^\mu_k ) dx  \\
& =    \sum_{k=1}^K  1_{\bar \mrp^\mu_k < \epsilon}  (Q_{\mu,k+1}^K -Q_{\mu,k}^K) 2  \bar \mrp^\mu_k \le  2 \epsilon
\label{eq:p1}
\end{align}
Notice that  if $\mrp^\mu_k \ge \epsilon$, $ Q_{\mu,k+1}^K -Q_{\mu,k}^K  \le \frac{1}{\epsilon K}  $ and again due to mean value theorem and noting $C= \sup_y \mrp^{\mu'}(y)$:
\begin{align}
\sum_{k=1}^K  1_{\bar \mrp^\mu_k  \ge \epsilon} \int_{Q_{\mu,k}^K}^{Q_{\mu,k+1}^K}  |\mrp^\mu(x)  - \bar \mrp^\mu_k |  dx  = & \sum_{k=1}^K  1_{\bar \mrp^\mu_k  \ge \epsilon} \int_{Q_{\mu,k}^K}^{Q_{\mu,k+1}^K}  |\mrp^\mu(x)  -  \mrp^\mu_k(\tilde x_k)| dx \\
\le &  C \sum_{k=1}^K  1_{\bar \mrp^\mu_k  \ge \epsilon}  \int_{Q_{\mu,k}^K}^{Q_{\mu,k+1}^K} |x- \tilde x_k| dx \\
\le &  C  \sum_{k=1}^K   1_{\bar \mrp^\mu_k  \ge \epsilon}   (Q_{\mu,k+1}^K -Q_{\mu,k}^K) ^2 \\
\le &  C  \sum_{k=1}^K   (Q_{\mu,k+1}^K -Q_{\mu,k}^K) \frac{1}{\epsilon K} = C \frac{1}{\epsilon K}
\label{eq:p2}
\end{align}
Pluging equations \eqref{eq:p1}, \eqref{eq:p2} in \eqref{eq:p0} gives:
\begin{align}
\Wc_1(\mu,\hat\mu^K) \leq {\rm diam}(\Kc) (2  \epsilon + C \frac{1}{\epsilon K})
\end{align}

For $\epsilon$ given, it is possible to have $K$ high enough to get $
\Wc_1(\mu,\hat\mu^K)  \le {\rm diam}(\Kc)  3 \epsilon $ and noting that $\Wc_2(\mu,\hat\mu^K)$ $\leq$ $\sqrt{{\rm diam}(\Kc)\Wc_1(\mu,\hat\mu^K)}$ by Hölder inequality, we get that $\Wc_2(\mu,\hat\mu^K) \le   {\rm diam}(\Kc) \sqrt{ 3 \epsilon}$.\\
Therefore we have shown that
\begin{align} \label{convWK} 
\sup_{\mu \in \Dc_{C^1}(\Kc)} \Wc_2(\mu,\hat \mu^K)  & \rightarrow \; 0, \quad \mbox{ as } K \rightarrow \infty. 
\end{align} 
Then we use the same argument as in \cite{pham2022mean} to get that, for a given $\epsilon$, it is possible to set $K$ such that
\begin{align} \label{inegV1} 
| V(\mu) - V(\hat \mu^K) | & \leq \; \frac{\varepsilon}{2}, \quad \forall   \; \mu \in \Dc_{C^1}(\Kc). 
\end{align}
At last using a classical universal theorem, we can conclude as in Step 2 of \cite{pham2022mean}.

\end{proof}

\subsection{The moment network}
Let  $\Dc_2(\R^d)$ be the subset of probability measures $\mu$ in $\Pc_2(\R^d)$ that admit a density function $\mrp^\mu$  with respect to the Lebesgue measure $\lambda_d$ on $\R^d$. 
Fixing  $\Kc$ as  a bounded rectangle in $\R^d$, we want to approximate the function of the distribution with support in $\Kc$.
By choosing $K > 0 $, the main  features of the distribution $\mu$ are approximated  by  choosing the lowest moments of the distribution so:
\begin{align}
\boM_\mu^{K} = (\E _{X \sim \mu} [ \prod_{i=1,\ldots,d} X_i^{k_i}])_{ \sum_{i=1}^d k_i \le  K}
\end{align}
with values in $\R^{\hat K}$, with $\hat K = \#\{ p \in \N^d / \sum_{i=1}^d p_i \le K \}$.
\begin{Remark}
Since the support of the distribution is bounded all moments are well defined.
\end{Remark}
A moment network is an operator on $\Dc_2(\R^d)$ in the form
 \begin{align}
 \Nc_Q(\mu) =  \Phi_\theta(\boM_\mu^K),
 \end{align}
 so setting $R^{\hat K}(\mu)=  \boM_\mu^K$ in the equation \eqref{eq:probTheta}.
 
 \begin{Remark}
 This approach is closely related to the moment problem which consists in determining a distribution from its moments if they exist. If the support of the distribution is $[0, \infty[$, this is the Stieltjes moment problem, and if the support is $\R$, this is the Hamburger moment problem.
If $\mu$ is a positive measure with all moments defined, we say that $\mu$  is a solution to the moment problem.
If the solution to the moment problem is unique, the moment problem is called determinate. Otherwise
the moment problem is said to be indeterminate.
In our case, where the support is compact, this problem is known as the Haussdorf moment problem and it is determinate.
The connection with the moment problem and the reconstruction of an approximation of the quantile has been studied for example in \cite{mnatsakanov2009recovery}.
\end{Remark}
We now give a universal approximation theorem for this neural network:
\begin{Theorem} \label{theounivmome} 
Let $\Kc$ be a bounded rectangle in  $\R^d$, and $V$ be a continuous function from $\Pc_2(\Kc)$ into $\R^p$,   then, for all $\varepsilon$ $>$ $0$, there exists 
$K$ and $\Psi$ a  neural network from $\R^{\hat K}$ to $\R^p$  such that
\begin{align}
 \big| V(\mu) - \Psi(\boM_\mu^K) \big|  
& \leq \; \varepsilon \quad \forall \mu \in \Pc(\Kc)
\end{align}
\end{Theorem}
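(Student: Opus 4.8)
The plan is to mirror the three‑step strategy used for Theorem~\ref{theounivbin} (itself adapted from Theorem 2.1 of \cite{pham2022mean}): first show that $V$ factors, up to an arbitrarily small error, through the finite moment map $\mu\mapsto\boM_\mu^K$ via a continuous function on $\R^{\hat K}$, and then approximate that function by a neural network through a classical universal approximation theorem. The one real difference is that there is no explicit one‑line reconstruction of a measure from finitely many of its moments playing the role of the density approximation $p^{\boQ}_{\mu,K}$ of the quantile proof; I would therefore replace the construction of an approximating measure $\hat\mu^K$ by an abstract compactness argument. Throughout, recall that since $\Kc$ is a bounded rectangle we have $\Pc_2(\Kc)=\Pc(\Kc)$, and that this set, equipped with the weak topology (which on measures supported in $\Kc$ coincides with the $\Wc_1$‑ and $\Wc_2$‑topologies), is a compact metric space; hence $V$ is automatically uniformly continuous and bounded on it, and each moment functional $\mu\mapsto\E_{X\sim\mu}[\prod_i X_i^{k_i}]$ is continuous on it.

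The core step is the following factorization lemma: for every $\varepsilon>0$ there exist $K\in\N^*$ and $\delta>0$ such that, for all $\mu,\nu\in\Pc(\Kc)$,
\begin{align}
\big\|\boM_\mu^K-\boM_\nu^K\big\|_\infty<\delta \ \Longrightarrow\ \big|V(\mu)-V(\nu)\big|<\varepsilon .
\end{align}
I would prove this by contradiction. If it fails for some $\varepsilon_0>0$, then for each $n$ there are $\mu_n,\nu_n\in\Pc(\Kc)$ with $\|\boM_{\mu_n}^n-\boM_{\nu_n}^n\|_\infty<1/n$ but $|V(\mu_n)-V(\nu_n)|\ge\varepsilon_0$. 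By compactness of $\Pc(\Kc)$, extract subsequences $\mu_n\to\mu$ and $\nu_n\to\nu$ weakly. For each fixed multi-index $k$, the $k$-th moments of $\mu_n$ and $\nu_n$ converge to those of $\mu$ and $\nu$ respectively, while for $n\ge\sum_i k_i$ they differ by less than $1/n$; hence $\mu$ and $\nu$ have all moments equal. Since polynomials are dense in $C(\Kc)$ by the Stone–Weierstrass theorem (equivalently, the Hausdorff moment problem on $\Kc$ is determinate, cf.\ the Remark above), this forces $\mu=\nu$, and continuity of $V$ then gives $|V(\mu_n)-V(\nu_n)|\to|V(\mu)-V(\nu)|=0$, contradicting $|V(\mu_n)-V(\nu_n)|\ge\varepsilon_0$.

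Given $\varepsilon>0$, I would apply the lemma with $\varepsilon/3$ to obtain $K$ and $\delta$, and set $M_K:=\{\boM_\mu^K:\mu\in\Pc(\Kc)\}\subset\R^{\hat K}$, which is compact as the continuous image of $\Pc(\Kc)$. Choosing, for each $v\in M_K$, some $\mu_v\in\Pc(\Kc)$ with $\boM_{\mu_v}^K=v$ and setting $g_K(v):=V(\mu_v)$, the lemma yields both $\sup_{\mu\in\Pc(\Kc)}|V(\mu)-g_K(\boM_\mu^K)|\le\varepsilon/3$ and the uniform oscillation bound $\|v-v'\|_\infty<\delta\Rightarrow|g_K(v)-g_K(v')|\le\varepsilon/3$ on $M_K$. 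Although $g_K$ need not itself be continuous (finitely many moments pin down $\mu$, hence $V(\mu)$, only approximately), I would smooth it: take a finite cover of $M_K$ by balls $B_i$ of radius $\delta/3$, with a subordinate continuous partition of unity $\{\psi_i\}$ on $\R^{\hat K}$, pick $v_i\in B_i\cap M_K$, and set $\tilde g_K:=\sum_i\psi_i\,g_K(v_i)$, a continuous function $\R^{\hat K}\to\R^p$ with $\sup_{M_K}|\tilde g_K-g_K|\le\varepsilon/3$ (any $v\in M_K$ in the support of $\psi_i$ lies within $\delta$ of $v_i$). Finally, invoking a classical universal approximation theorem as in Step 2 of \cite{pham2022mean}, there is a neural network $\Psi:\R^{\hat K}\to\R^p$ with $\sup_{v\in M_K}|\tilde g_K(v)-\Psi(v)|<\varepsilon/3$; combining the three estimates gives $|V(\mu)-\Psi(\boM_\mu^K)|<\varepsilon$ for all $\mu\in\Pc(\Kc)$.

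The main obstacle is the factorization lemma, together with the fact just noted that $V$ does not literally factor continuously through $\boM^K$ for a fixed finite $K$, which is what forces the mollification step. An alternative route, staying closer to the proof of Theorem~\ref{theounivbin}, would be to build an explicit reconstruction $\hat\mu^K$ of $\mu$ from $\boM_\mu^K$ (for instance via the recovery formula of \cite{mnatsakanov2009recovery} or a maximum‑entropy reconstruction), prove $\sup_{\mu\in\Pc(\Kc)}\Wc_2(\mu,\hat\mu^K)\to0$ and continuity of $v\mapsto\hat\mu^K$, and then conclude as in Step 2 there; that approach is more computational, and I would favour the compactness argument above.
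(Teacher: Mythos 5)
Your argument is correct, but it takes a genuinely different route from the paper. The paper's proof is essentially two lines: it invokes the density of cylindrical polynomials in the space of continuous functions on $\Pc(\Kc)$ (Lemma 3.12 of the cited reference \cite{guophawei22}) to produce directly a map $P$ on $\R^{\hat K}$ with $|V(\mu)-P(\boM_\mu^K)|\le\varepsilon/2$, notes that $\boM_\mu^K$ ranges over a compact set, and finishes with the classical universal approximation theorem. You instead prove the needed factorization from scratch: a compactness-and-contradiction argument showing that measures with nearly equal low-order moments have nearly equal $V$-values, resting on the compactness of $\Pc(\Kc)$ in the weak (equivalently $\Wc_2$) topology and on Stone--Weierstrass (determinacy of the Hausdorff moment problem). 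The price of avoiding the external density lemma is the extra mollification step --- since $V$ only approximately factors through $\boM^K$, the induced map $g_K$ on moment space need not be continuous, and your partition-of-unity smoothing is exactly what is needed to make the universal approximation theorem applicable; your handling of that step (balls of radius $\delta/3$, so that any two points sharing a ball are within $\delta$) is sound, modulo the routine point that the cover must be completed by $\R^{\hat K}\setminus M_K$ to get a partition of unity on all of $\R^{\hat K}$. What your approach buys is self-containedness and a clear identification of the analytic content (compactness plus moment determinacy); what the paper's approach buys is brevity and, incidentally, the stronger information that the intermediate map can be taken polynomial rather than merely continuous.
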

\begin{proof}
By the density of the cylindrical polynomial function with respect to distribution functions, see Lemma 3.12 in \cite{guophawei22}, for all $\varepsilon$ $>$ $0$, there exists $K$ 
$\in$ $\N^*$,  $P$ a linear function from $\R^{\hat K}$ into $\R^p$, s.t.  
\begin{align}
\big| V(\mu) - P( \boM_\mu^K) \big| & \leq \;  \frac{\varepsilon}{2}, \quad \forall \mu \in \Pc(\Kc). 
\end{align}
Note that since $\Kc$  is  bounded, $\boM_\mu^K$ is in a compact $\Yc$ and  we  use  the  classical universal approximation theorem for finite-dimensional functions to obtain the existence of  a feedforward neural network $\Psi$ on $R^{\hat K}$ such that 
\begin{align}
\big| P(x) - \Psi(x) \big| & \leq \;  \frac{\varepsilon}{2}, \quad \forall  x \in \Yc. 
\end{align}
We conclude that for all  $\mu$ $\in$ $\Pc(\Kc)$, 
\begin{align} 
& \;  \big| V(\mu) - \Psi(\boM_\mu^K) \big| \nonumber \\
& \leq \;     \big| V(\mu) - P(\boM_\mu^K) \big|  +     \big| P(\boM_\mu^K) - \Psi(\boM_\mu^K) \big|  \; \leq \; \varepsilon.  \label{Vpsi} 
\end{align} 
\end{proof}

\section{Numerical tests}
\subsection{Univariate case}
All functions are tested with the developed networks, and the results are compared with those obtained using the
bin and cylinder methods in \cite{pham2022mean}.
In dimension one, we propose to learn the following functions $V$ with support in $[-2,2]$.
\begin{itemize}
    \item[A.]   The moment case
    \begin{align*}
        V(\mu)= \E_{X \sim \mu}[X] \E_{X \sim \mu}[X^4] - \E_{X \sim \mu}[X^2]
    \end{align*}
    \item[B.]   The pure  quantile case
    \begin{align*}
      V(\mu)=  Q_\mu(q)
    \end{align*}
    and we take $q=0.7$.
    \item[C.] The quantile-moment case
    \begin{align*}
      V(\mu)=   \E_{X \sim \mu}[X^3] (1+ Q_\mu(q))
    \end{align*}
    taking $q=0.9$.
        \item[D.]   The quantile-superquantile case
    \begin{align*}
      V(\mu)= \E_{X \sim \mu}[X / X > Q_\mu(q)] + Q_\mu(q)
    \end{align*}
    and we take $q=0.3$.
\end{itemize}
All distribution features are estimated with $N=200000$ samples, and the distribution is sampled using the method in the  section \ref{sec:sampling} using $J_1=400$ bins.
During the training, $M=20$ distributions (the batch size) are used. All curves plot the MSE obtained during gradient iterations  as follows : 
every 100 iterations, the MSE is estimated using $1000$ distributions and the results are plotted using a window averaging the estimates over $20$ consecutive iterations.
The ReLU activation function is used for all networks. Similar results are obtained using the $\tanh$ activation function.
The quantile, and moment networks  (and the networks derived from these features) use 2 hidden layers with $20$ neurons. The cylinder network, which uses 2 networks, has  $3$ layers and $20$ neurons for the "inner" network and $2$ layers with $20$ neurons for the "outer" network (see \cite{pham2022mean}).
\begin{figure}[H]
     \centering
     \begin{minipage}[t]{0.49\linewidth}
  \centering
 \includegraphics[width=\textwidth]{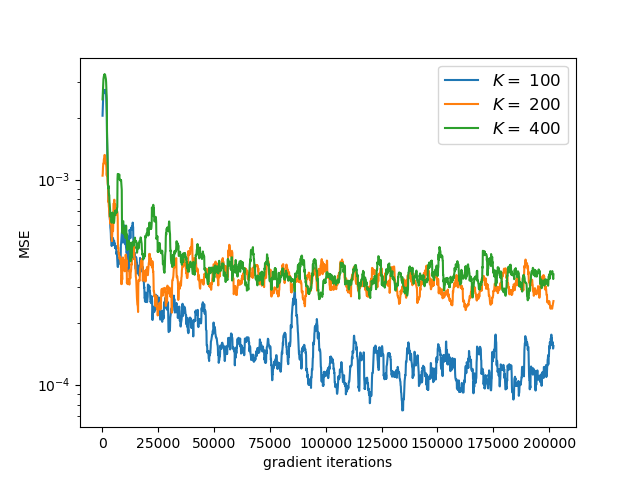}
\caption*{Case A}
\end{minipage}
 \begin{minipage}[t]{0.49\linewidth}
  \centering
 \includegraphics[width=\textwidth]{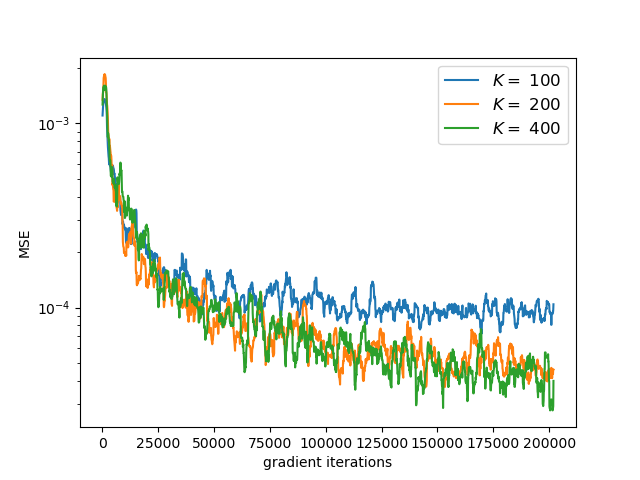}
\caption*{Case B}
\end{minipage}
   \centering
     \begin{minipage}[t]{0.49\linewidth}
  \centering
 \includegraphics[width=\textwidth]{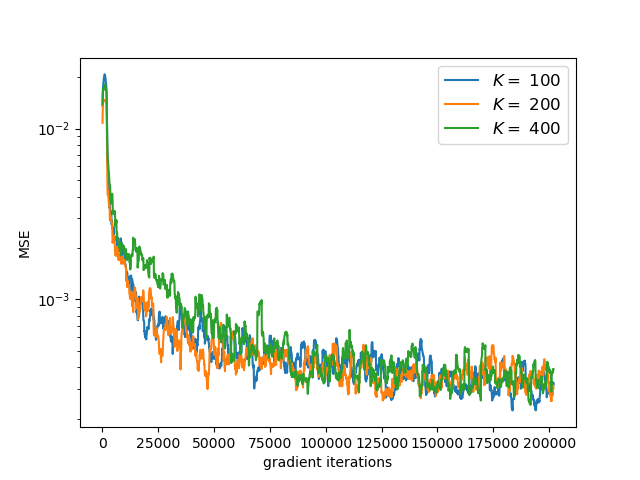}
\caption*{Case C}
\end{minipage}
     \begin{minipage}[t]{0.49\linewidth}
  \centering
 \includegraphics[width=\textwidth]{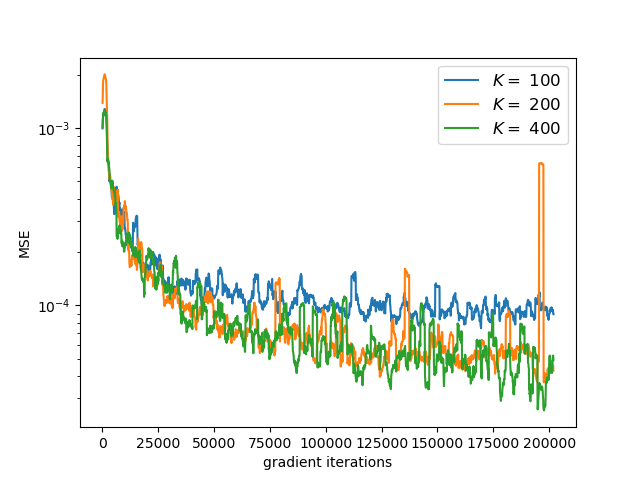}
\caption*{Case D}
\end{minipage}
     \caption{Quantile network convergence depending on the number of quantiles.}
     \label{fig:binQuant}
 \end{figure}
 The neural network seems to be less accurate for functions involving moments (cases A and C) (see figure \ref{fig:binQuant}).
 A number of quantiles equal to $200$ seems to be sufficient to obtain a good accuracy.
 
\begin{figure}[H]
     \centering
     \begin{minipage}[t]{0.49\linewidth}
  \centering
 \includegraphics[width=\textwidth]{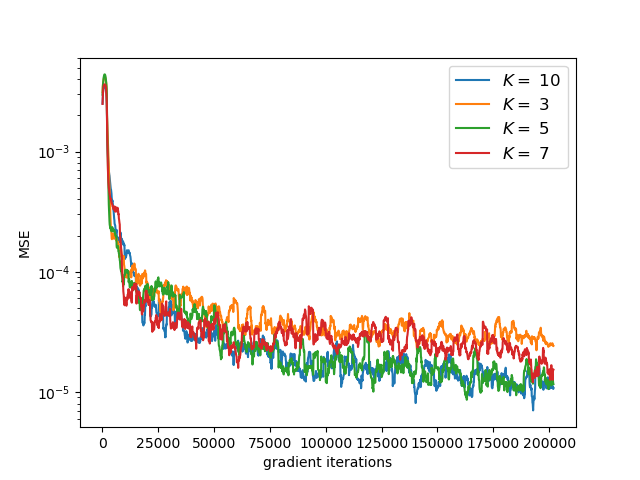}
\caption*{Case A}
\end{minipage}
 \begin{minipage}[t]{0.49\linewidth}
  \centering
 \includegraphics[width=\textwidth]{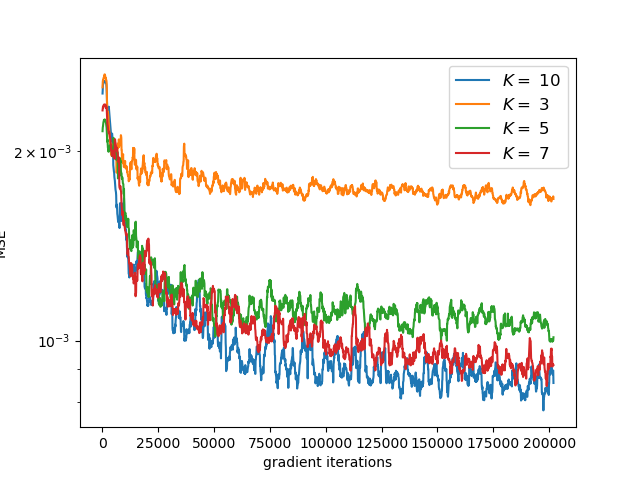}
\caption*{Case B}
\end{minipage}
   \centering
     \begin{minipage}[t]{0.49\linewidth}
  \centering
 \includegraphics[width=\textwidth]{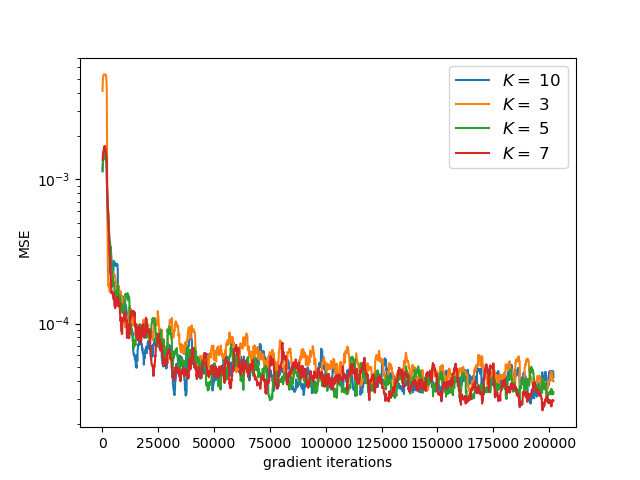}
\caption*{Case C}
\end{minipage}
   \centering
     \begin{minipage}[t]{0.49\linewidth}
  \centering
 \includegraphics[width=\textwidth]{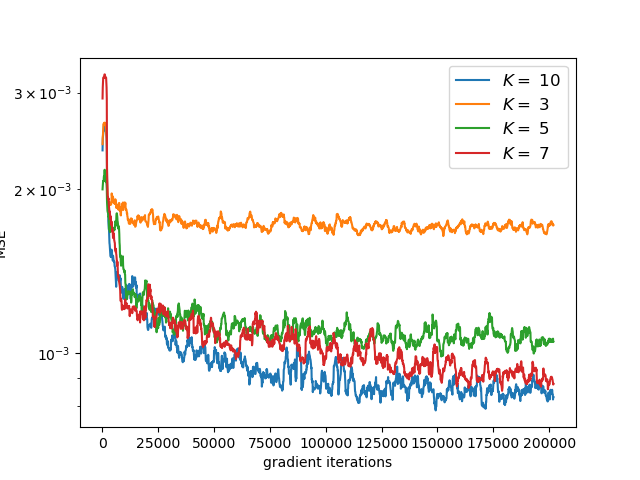}
\caption*{Case D}
\end{minipage}
     \caption{Moment network convergence depending on the number of moments.}
     \label{fig:Mom1D}
 \end{figure}
 In contrast to the quantile networks, the results are not surprisingly better when the functional to be approximated is mainly a function of the moments (see Figure \ref{fig:Mom1D}).
 In case A, it is optimal to use a small number of moments, since the functional is only a function of moments with degrees less than 5.\\
 Since the best network depends on the case, we can develop  new networks based on moments and quantiles:
 \begin{itemize}
 \item A first one uses a concatenation of the features of the two proposed networks.
 Using the same notation as in the section \ref{sec:theNets}, 
 \begin{align}
 \Nc_{QM}(\mu) =  \Phi_\theta(\boM_\mu^{K^M}, \boQ_\mu^{K^Q}),
 \end{align}
 where now $K^M$ is the number of moments retained in the approximation and $K^Q$ is the number of quantiles.
 This neural network is the {\bf moment and quantile network}. The results obtained for this network are shown in Figure
\ref{fig:QuantAndMom}.   Overall, it seems that a moment number of $K^M=7$ and a quantile number of $K^Q=200$ is a good choice.
 \begin{figure}[H]
     \centering
     \begin{minipage}[t]{0.49\linewidth}
  \centering
 \includegraphics[width=\textwidth]{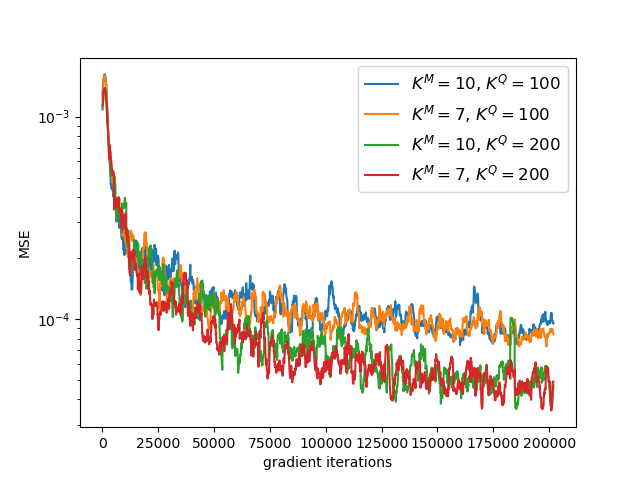}
\caption*{Case A}
\end{minipage}
 \begin{minipage}[t]{0.49\linewidth}
  \centering
 \includegraphics[width=\textwidth]{1DDistOnlyNetConvActiv1DNetMomQuantNetFuncEQDNbsim200000NbL20NbNe2.png}
\caption*{Case B}
\end{minipage}
   \centering
     \begin{minipage}[t]{0.49\linewidth}
  \centering
 \includegraphics[width=\textwidth]{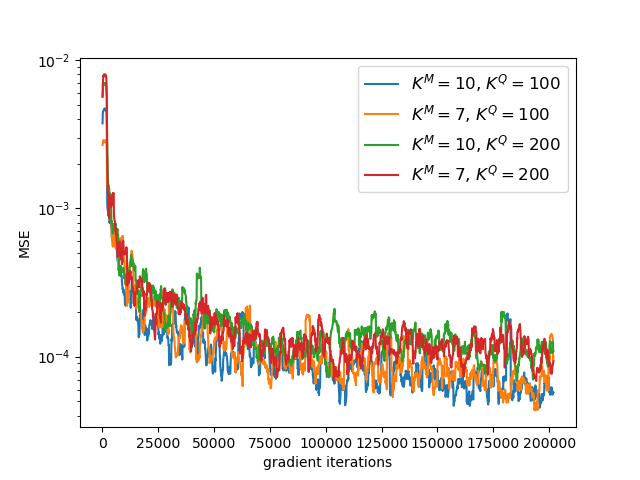}
\caption*{Case C}
\end{minipage}
 \centering
     \begin{minipage}[t]{0.49\linewidth}
  \centering
 \includegraphics[width=\textwidth]{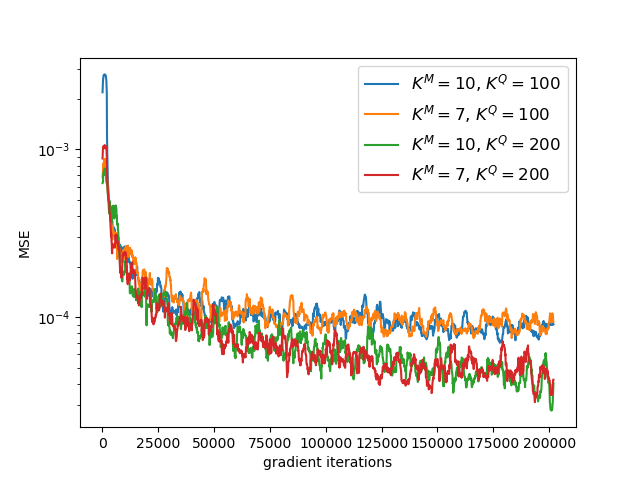}
\caption*{Case D}
\end{minipage}
     \caption{Moment and quantile network convergence depending on $K^M$ and $K^Q$.}
     \label{fig:QuantAndMom}
 \end{figure}

\item In a second one, instead of concatenating some moments expectations  and quantiles,  we can take some quantiles of the moments by defining:

 \begin{align}
\boL_\mu^{K_M, K_Q} = \big[ Q_{\prod_{i=1,\ldots,d} X_i^{ \tilde k_i} / X \sim \mu} ( \frac{ \hat k}{K_Q+1} ) \big] _{ \sum_{i=1}^d \tilde k_i \le K_M,  1 \le \hat k \le K_Q }
\end{align}
A {\bf quantile of moments network} is an operator on $\Dc_2(\R^d)$ in the form
 \begin{align}
 \Nc_Q(\mu) =  \Phi_\theta(\boL_\mu^{K_M, K_Q} ),
 \end{align}
 thus setting $R^K(\mu)=  \boL_\mu^{K_M, K_Q}$ in the equation \eqref{eq:probTheta}. The results for this network are shown in the figure \ref{fig:QuantOfMom}. The  convergence seems to be good in all cases, but we observe that this convergence is less regular than with the previous neural network.

 \begin{figure}[H]
     \centering
     \begin{minipage}[t]{0.49\linewidth}
  \centering
 \includegraphics[width=\textwidth]{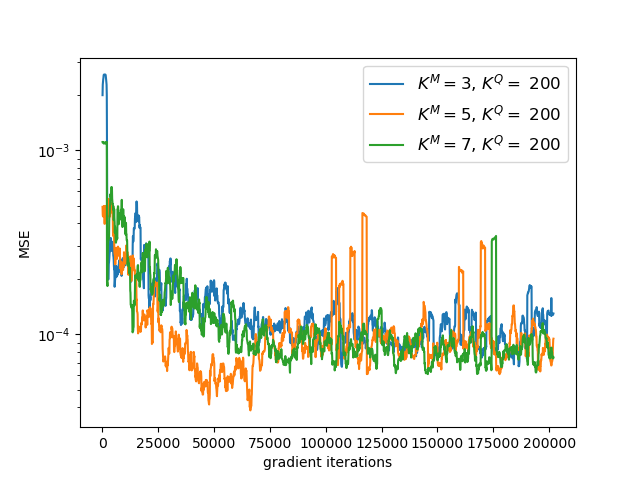}
\caption*{Case A}
\end{minipage}
 \begin{minipage}[t]{0.49\linewidth}
  \centering
 \includegraphics[width=\textwidth]{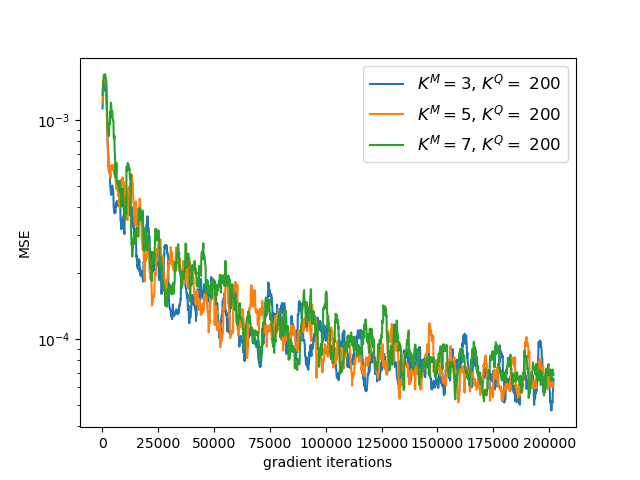}
\caption*{Case B}
\end{minipage}
   \centering
     \begin{minipage}[t]{0.49\linewidth}
  \centering
 \includegraphics[width=\textwidth]{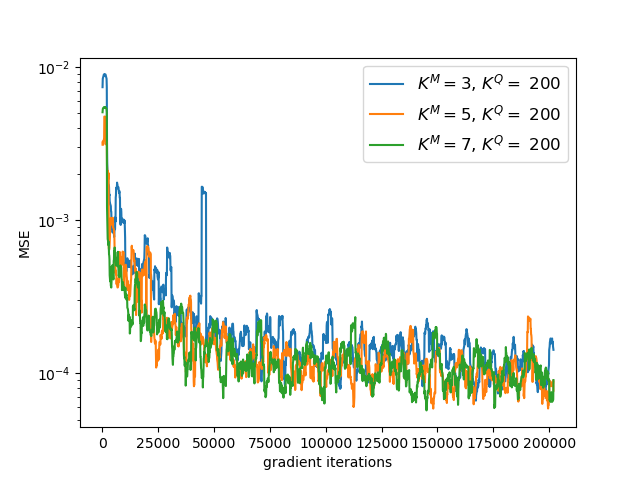}
\caption*{Case C}
\end{minipage}
 \centering
     \begin{minipage}[t]{0.49\linewidth}
  \centering
 \includegraphics[width=\textwidth]{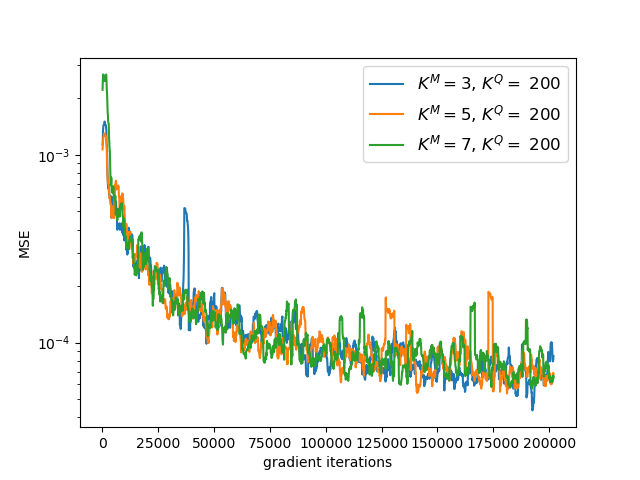}
\caption*{Case D}
\end{minipage}
     \caption{Quantile of moments network convergence.}
     \label{fig:QuantOfMom}
 \end{figure}
 \end{itemize}
 \begin{Remark}
 We have  also tested  networks based on superquantiles or superquantiles of moments.
 \begin{itemize}
 \item Defining for one-dimensional distributions
 \begin{align}
\boV_\mu^{ K} = \big[ E_{X \sim \mu}[  X  \ge  Q_{\mu} ( \frac{ \hat k}{K+1} ) ]\big] _{ 0 \le \hat k \le K},
\end{align}
a superquantile  network is an operator on $\Dc_2(\R^d)$ in the form
 \begin{align}
 \Nc_Q(\mu) =  \Phi_\theta(\boV_\mu^{ K} ),
 \end{align}
 \item and defining for potentially multivariate distributions:
 \begin{align}
\boS_\mu^{K_M, K_Q} = \big[ E_{X \sim \mu}[  \prod_{i=1,\ldots,d} X_i^{ \tilde k_i} / \prod_{i=1,\ldots,d} X_i^{ \tilde k_i} \ge  Q_{\prod_{i=1,\ldots,d} X_i^{ \tilde k_i} / X \sim \mu} ( \frac{ \hat k}{K_Q+1} ) ]\big] _{ \sum_{i=1}^d \tilde k_i \le K_M,  0 \le \hat k \le K_Q },
\end{align}
a superquantile of moment network is an operator on $\Dc_2(\R^d)$ in the form
 \begin{align}
 \Nc_Q(\mu) =  \Phi_\theta(\boS_\mu^{K_M, K_Q} ).
 \end{align}
\end{itemize}
Both neural networks give good results but never better than the cylinder network. We don't report them.
\end{Remark}

We now compare all the networks together on the figure \ref{fig:Comp1D} using 
\begin{itemize}
\item $K=200$ quantiles for the quantile network, 
\item $K=10$ moments for the moment network, 
\item $K^M=7$ moments and $K^Q=200$ quantiles for the "quantile of moments" and "moment and quantile"  networks, 
\item $200$ bins for the bin network.
\end{itemize}
Overall, the "moment and quantile" network and the "quantile of moments" network seem to be the best choices.
 
 \begin{figure}[H]
     \centering
     \begin{minipage}[t]{0.49\linewidth}
  \centering
 \includegraphics[width=\textwidth]{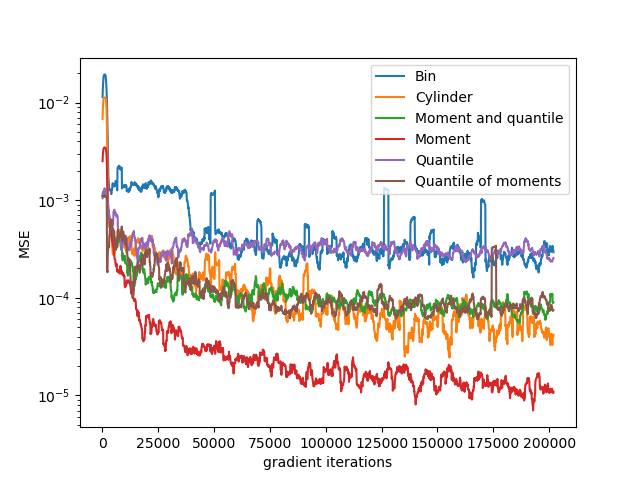}
\caption*{Case A}
\end{minipage}
 \begin{minipage}[t]{0.49\linewidth}
  \centering
 \includegraphics[width=\textwidth]{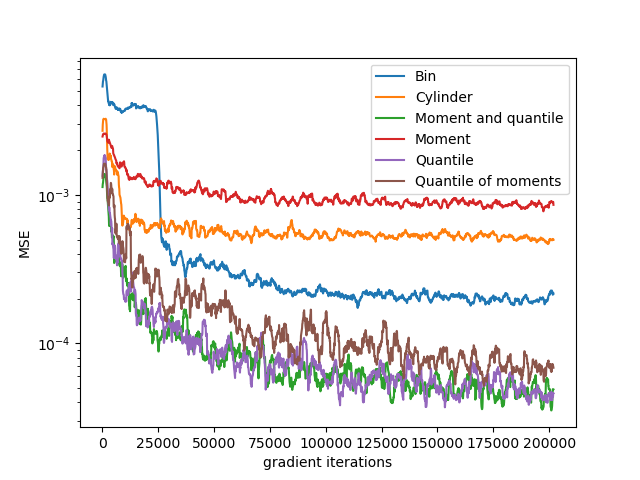}
\caption*{Case B}
\end{minipage}
   \centering
     \begin{minipage}[t]{0.49\linewidth}
  \centering
 \includegraphics[width=\textwidth]{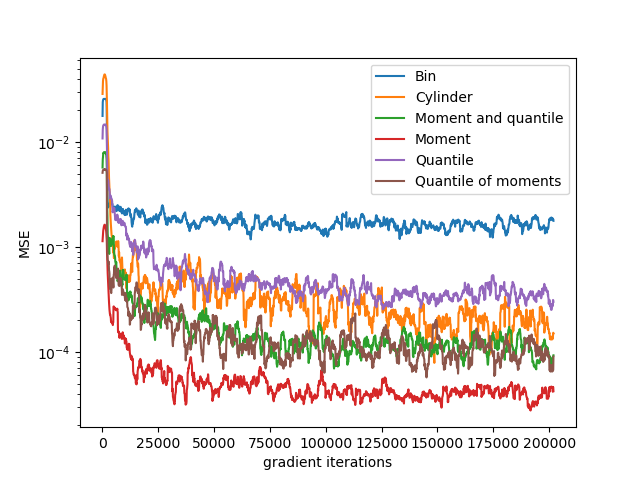}
\caption*{Case C}
\end{minipage}
\centering
     \begin{minipage}[t]{0.49\linewidth}
  \centering
 \includegraphics[width=\textwidth]{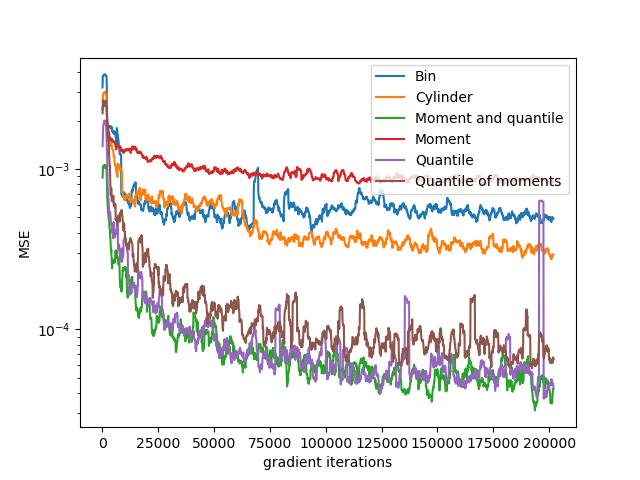}
\caption*{Case D}
\end{minipage}
     \caption{Convergence of the different networks in 1D.}
     \label{fig:Comp1D}
 \end{figure}
Finally, we compare the different networks, taking for all networks 3 layers and 40 neurons.
 \begin{figure}[H]
     \centering
     \begin{minipage}[t]{0.49\linewidth}
  \centering
 \includegraphics[width=\textwidth]{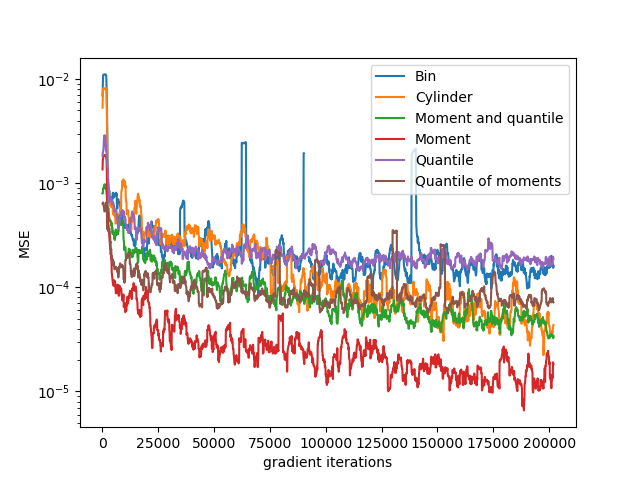}
\caption*{Case A}
\end{minipage}
 \begin{minipage}[t]{0.49\linewidth}
  \centering
 \includegraphics[width=\textwidth]{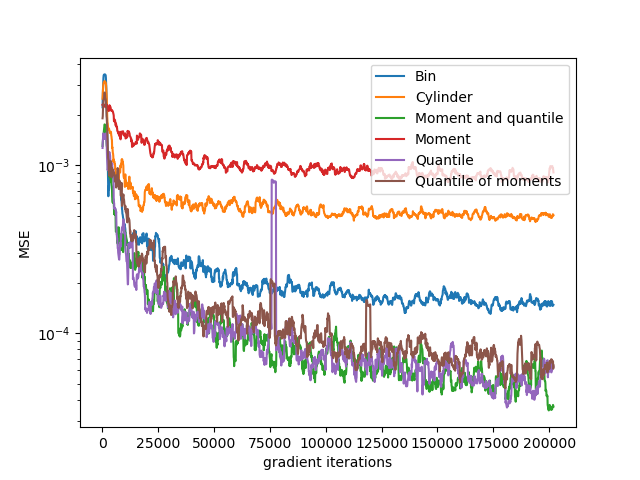}
\caption*{Case B}
\end{minipage}
   \centering
     \begin{minipage}[t]{0.49\linewidth}
  \centering
 \includegraphics[width=\textwidth]{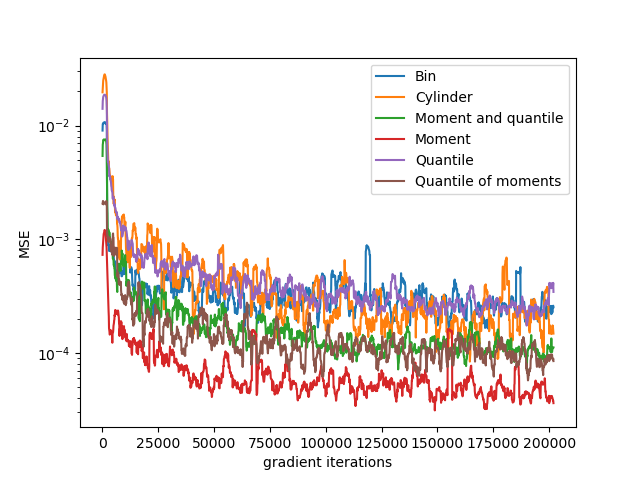}
\caption*{Case C}
\end{minipage}
\centering
     \begin{minipage}[t]{0.49\linewidth}
  \centering
 \includegraphics[width=\textwidth]{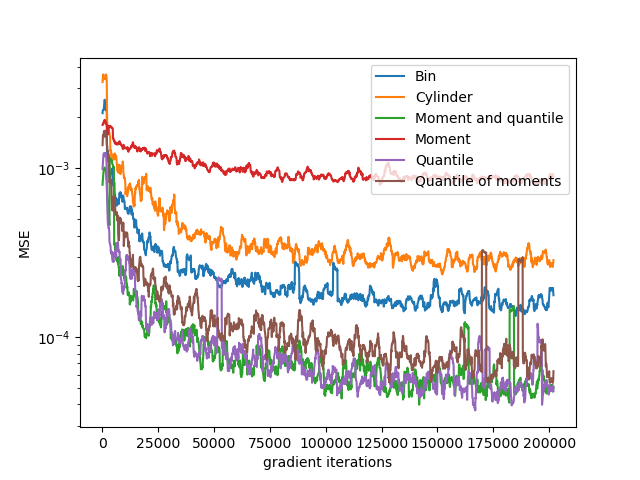}
\caption*{Case D}
\end{minipage}
     \caption{Convergence of the different networks in 1D taking 3 layers and 40 neurons.}
     \label{fig:Comp1DRaf}
 \end{figure}
  The results for the bin network are improved, but the conclusions remain the same.
  
\subsection{Bivariate case}
We assume that the support is in $[-2,2]^2$.
For a distribution $\mu \in \Pc_2(\R^d)$, $(j,m) \in \N^* \times \N^*$, we note $\hat F_{\mu,j,m}$ the cumulative distribution function of $X_1^j X_2^m$ where $X \sim \mu$ and  $\hat Q_{\mu,j,m}(p)= \inf\{ x \in \R : p \le \hat F_{\mu,j,m}(x) \}$. We define the following test cases:
\begin{itemize}
\item[A.] The moment case
\begin{align*}
        V(\mu)=  \sum_{i=1}^2  \left[ \E_{X \sim \mu_i}[X] \E_{X \sim \mu_i}[X^4] - \E_{X \sim \mu_i}[X^2] \right].
    \end{align*}
\item [B.] The  quantile-superquantile case
\begin{align*}
V(\mu)= & \sum_{i=1}^2 [ \E_{X \sim \mu_i}[X / X > Q_{\mu_i}(q)] + Q_{\mu_i}(q)] + \\
& \quad   \E_{X \sim \mu}[X_1 X_2 / X_1 X_2  > \hat Q_{\mu, 1,1}(q)]
\end{align*}
with $q=0.7$.
\item[C.] The quantile moment case
\begin{align*}
V(\mu)= & (1+Q_{\mu_1}(q)) \E_{X \sim \mu_1}[X^3]  + \E_{X \sim \mu_2}[X^3] + \\
& \hat Q_{\mu, 2,1}(q) + \hat Q_{\mu, 1, 2}(q)
\end{align*}
with $q=0.9$.
\item[D.] The quantile-superquantile marginal case
\begin{align*}
V(\mu) = & \sum_{i=1}^2 [ \E_{X \sim \mu_i}[X / X > Q_{\mu_i}(q_i)] + Q_{\mu_i}(q_i)]
\end{align*}
with $q=(0.6,0.3)$.
\item[E.] The quantile-cross-superquantile case
\begin{align*}
  V(\mu) = &  \E_{X \sim \mu}[X_2 / X_2 > Q_{\mu_1}(q)]  + Q_{\mu_1}(q)
\end{align*}
with $q=0.2$.
\item [F.] The quantile marginal case
\begin{align*}
  V(\mu) = &   Q_{\mu_1}(q) +  Q_{\mu_2}(q)
\end{align*}
with $q=0.8$.
\end{itemize}
We test the bin network, the cylinder network, the moment network, and the quantile of moments network on the different cases.
The bin network fails in all the test cases with a number of layers equal to 2 or 3 and a number of neurons taken equal to 20, 40 and 80. As for the other networks, we keep the same number of layers and neurons as in the previous section.
For the moment network we use $K=7$, while for the quantile of moment network we use 
$K^M=5$ and $K^Q=200$.
The distribution features are estimated using $N=400000$ samples, and we take $(J_1,J_2)= (200,200)$ to sample a given distribution.
\begin{figure}[H]
     \centering
     \begin{minipage}[t]{0.49\linewidth}
  \centering
 \includegraphics[width=\textwidth]{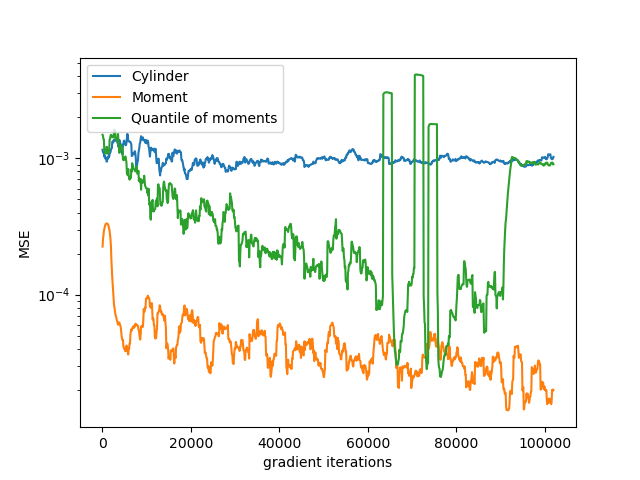}
\caption*{Case A}
\end{minipage}
 \begin{minipage}[t]{0.49\linewidth}
  \centering
 \includegraphics[width=\textwidth]{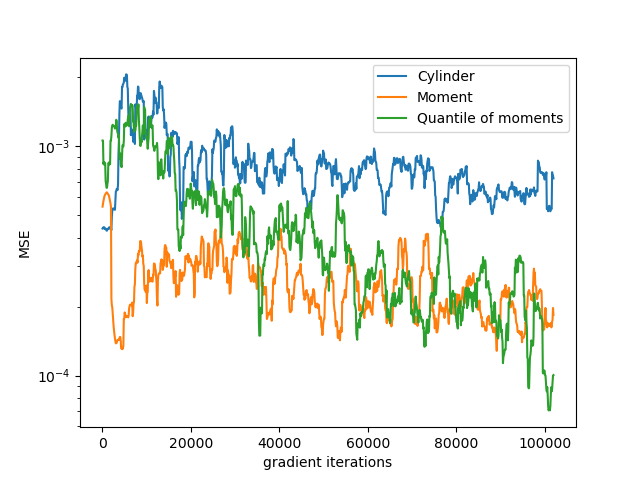}
\caption*{Case B}
\end{minipage}
   \centering
     \begin{minipage}[t]{0.49\linewidth}
  \centering
 \includegraphics[width=\textwidth]{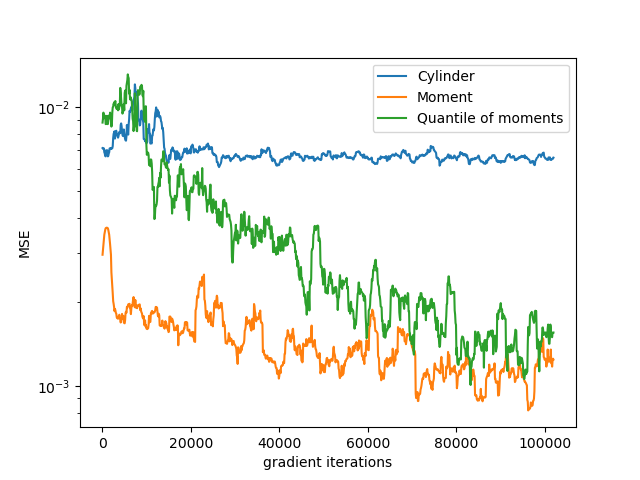}
\caption*{Case C}
\end{minipage}
   \centering
     \begin{minipage}[t]{0.49\linewidth}
  \centering
 \includegraphics[width=\textwidth]{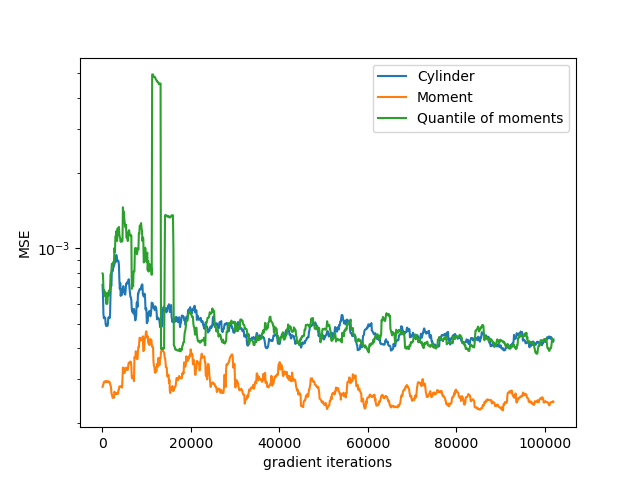}
\caption*{Case D}
\end{minipage}
   \centering
     \begin{minipage}[t]{0.49\linewidth}
  \centering
 \includegraphics[width=\textwidth]{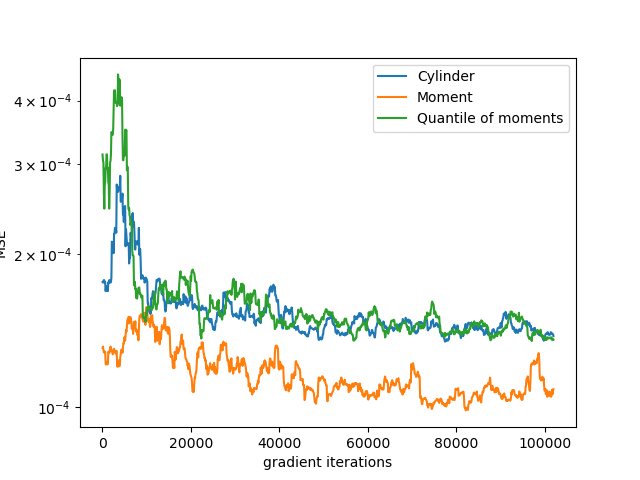}
\caption*{Case E}  \centering
\end{minipage}
    \begin{minipage}[t]{0.49\linewidth}
  \centering
 \includegraphics[width=\textwidth]{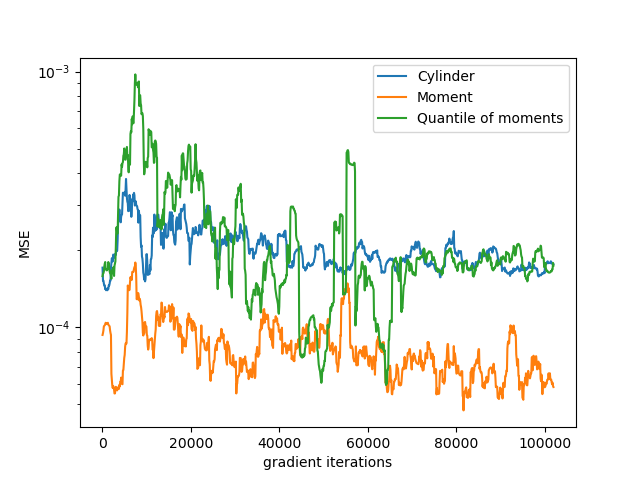}
\caption*{Case F}
\end{minipage}
     \caption{Convergence of the different networks in 2D.}
     \label{fig:Comp2D}
 \end{figure}
 The tests are shown in figure \ref{fig:Comp2D}.
 In all cases, the moment network gives the best results. We observe a loss not as good as in dimension one for the case C.

\section{Conclusion}
New networks have been developed to learn functions of distributions: some of them outperform the existing ones.
In all cases, the best networks are based on some moments of the distribution.
For univariate distributions, it is optimal to add some information, for example based on quantiles, to get an effective scheme on all the test cases.
For bivariate distributions, it is sufficient to take the expectation of the moments to get the best scheme. 
Using this moment scheme, the resolution of the PDE in Wasserstein space becomes possible in the multivariate case.

\printbibliography

\end{document}